\newtheorem{theorem}{Theorem}
\begin{document}
\title{Fisher Information and Natural Gradient Learning of Random Deep Networks}
\author{Shun-ichi Amari\thanks{RIKEN CBS, Wako-shi, Japan} $^{\dag}$
\and Ryo Karakida\thanks{AIST, Tokyo, Japan} \and Masafumi
Oizumi\thanks{Araya Inc., Tokyo, Japan}}
\date{}

\maketitle

\begin{abstract}
A deep neural network is a hierarchical nonlinear model transforming input signals to output signals.  Its input-output relation is considered to be stochastic, being described for a given input by a parameterized conditional probability distribution of outputs.  The space of parameters consisting of weights and biases is a Riemannian manifold, where the metric is defined by the Fisher information matrix. The natural gradient method uses the steepest descent direction in a Riemannian manifold, so it is effective in
learning, avoiding plateaus.  It requires inversion of the Fisher information matrix, however, which is practically impossible when the matrix has a huge number of dimensions.  Many methods for approximating the natural gradient have therefore been introduced. The present paper uses statistical neurodynamical method to reveal the properties of the Fisher information matrix in a net of random connections under the mean field approximation.  We prove that the Fisher information matrix is unit-wise block diagonal supplemented by small order terms of off-block-diagonal elements, which provides a justification for the quasi-diagonal natural gradient method by Y. Ollivier. A unitwise block-diagonal Fisher metrix reduces to the tensor product of the Fisher information matrices of single units. We further prove that the Fisher information matrix of a single unit has a simple reduced form, a sum of a diagonal matrix and a rank 2 matrix of weight-bias correlations.  We obtain the inverse of Fisher information explicitly. We then have an explicit form of the natural gradient, without relying on the numerical matrix inversion, which drastically speeds up stochastic gradient learning. 
\end{abstract}

\section{Introduction}

In modern deep learning, multilayer neural networks are usually trained by using the stochastic gradient-descent method (See Amari, 1967 for one of the earliest proposal of stochastic gradient descent for the purpose of applying multilayer networks). The parameter space of multilayer networks forms a Riemannian space equipped with Fisher information metric. Thus, instead of the usual gradient descent method, the natural gradient or Riemannian gradient method, which takes account of the geometric structure of the Riemmanian space, is more effective  for learning (Amari, 1998).  However, it has been difficult to apply the natural gradient descent because it needs the inversion of the Fisher information matrix, which is computationally heavy. Many approximation methods reducing computational costs have therefore been proposed (see Pascanu \& Bengio, 2013; Grosse \& Martens, 2016; Martens, 2017).

To resolve the computational difficulty of the natural gradient, we analyze the Fisher information matrix of a random network, where the connection weights and biases are randomly assigned, by using the mean field approximation (See also our accompanying paper Amari, Karakida and Oizumi, 2018 for the analysis of feedforward paths).  We prove that, when the number $n$ of neural units in each layer is sufficiently large, the subblocks of the Fisher information matrix ${\bm{G}}$ corresponding to different layers are of order $1/\sqrt{n}$, which is negligibly small.  Thus, ${\bm{G}}$ is approximated by a layer-wise diagonalized matrix. Furthermore, within the same layer, the subblocks among different units are also of order $1/\sqrt{n}$. 

This gives a justification for the approximated natural gradient method proposed by Kurita (1994) and studied in detail by Ollivier (2015) and Marceau-Caron and Ollivier (2016), where the unit-wise diagonalized ${\bm{G}}$ was used.  We further study the Fisher information matrix of a unit ---that is, a simple perceptron--- for the purpose of implementing unit-wise natural gradient learning.  We obtain an explicit form of the Fisher information matrix and its inverse under the assumption that inputs are subject to the standard non-correlated Gaussian distribution with mean 0.  The unit-wise natural gradient is explicitly formulated without numerical matrix inversion, provided inputs signals are subject to independent Gaussian distributions with mean 0, making it possible that natural gradient learning is realized without the burden of heavy computation. The results justify the quasi-diagonal approximation of the Fisher information matrix proposed by Y. Ollivier (2015), although our results are not exactly the same as Ollivier's results. Our approximation method is justified only for random networks under the mean-field assumption. However, it is expected that it would be effective for training actual deep networks considering the good performance shown in Olivier, 2015 and Marceau-Caron and Ollivier, 2016.

The results can be extended to residual deep networks with ReLU.  We show that the inputs to each layer are approximately subject to 0-mean independent Gaussian distributions in the case of a resnet, because of random linear transformations after nonlinear transformations in all layers. Therefore, our method would be particularly effective when residual networks are used.

To understand the structure of the Fisher information matrix, refer to Karakida, Akaho and Amari (2018), which analyzes the characteristics (the distribution of its eigenvalues) of the Fisher information matrix of a random net for the first time.

\section{Deep neural networks}

We consider a deep neural network consisting of $L$ layers.  Let $\stackrel{l-1}{\bm{x}}$ be the input vectors to the $l$-th layer and $\stackrel{l}{\bm{x}}$ the output vector of the $l$-th layer (see Figure 1).  
\begin{figure}
 \centering
 \includegraphics[width=7cm]{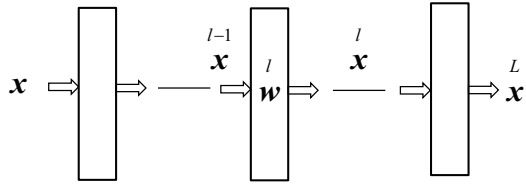}
 \caption{Deep neural network}
\end{figure} 
The input-output relation of the $l$-th layer is written as
\begin{equation}
 \label{eq:am120180618}
 \stackrel{l}{x_i} = \varphi \left(
 \sum_j \stackrel{l}{w_{ij}} \; \stackrel{l-1}{x_j} + \stackrel{l}{b_i}
 \right),
\end{equation}
where $\varphi$ is an activation function such as a rectified linear function (ReLU), sigmoid function, etc. Let $n_l$ be the number of neurons in the $l$-th layer. We assume that $n_1, n_2, \cdots, n_{L-1}$ are large, but the number of neurons in the final layer, $n_L$, can be small. Even $n_L=1$ is allowed.  The weights $\stackrel{l}{w_{ij}}$ and biases $\stackrel{l}{b_j}$ are random variables subject to independent Gaussian distributions with mean 0 and variances $\sigma^2_l/n_{l-1}$ and $\sigma^2_{bl}$, respectively. Note that each weight is a random variable of order $1/ \sqrt{n_{l-1}}$, but the weighted sum $\sum \stackrel{l}{w_{ij}} \stackrel{l-1}{x_j}$ is of order 1.

We recapturate briefly the feedforward analysis of input signals given in Poole et al., 2016 and Amari, Karakida and Oizumi, 2018, to introduce the activity $\stackrel{l}{A}$ and enlargement factor $\stackrel{l}{\chi}$. They also play a role in the feedback analysis obtaining the Fisher information (Schoenholtz et al., 2016; Karakida, Akaho and Amari, 2018).

Let us put
\begin{equation}
 \stackrel{l}{u_i} = \sum_j \stackrel{l}{w_{ij}} \; \stackrel{l-1}{x_j} \;
  + \stackrel{l}{b_i}.
\end{equation}
Given $\stackrel{l-1}{\bm{x}}$, $\stackrel{l}{u_i}$ are independently and identically distributed (iid) Gaussian random variables
with mean 0 and variance
\begin{equation}
 \label{eq:am3}
 \tau^2_l = 
 \frac{\sigma^2_l}n \sum \left( \stackrel{l-1}{x_j} \right)^2 + \sigma^2_{bl}   =
 \stackrel{l-1}{A} \sigma^2_l
  + \sigma^2_{bl},
\end{equation}
where
\begin{equation}
 \stackrel{l-1}A = \frac 1{n_{l-1}} \sum
 {\stackrel{l-1}{x_j}}\; \mbox{}^2
\end{equation}
is the total activity of input $\stackrel{l-1}{\bm{x}}$.

It is easy to show how $\stackrel{l}{A}$ develops across the layers.
Since $x^2_j= \varphi \left(u_j \right)^2$ are iid when
$\stackrel{l-1}{\bm{x}}$ is fixed, the law of large numbers guarantees
that their sum is replaced by the expectation when $n_{l-1}$ is large.
Putting $u_j=\tau_l v$ where is the standard Gaussian
variables, we have a recursive equation,
\begin{equation}
 \stackrel{l}{A} = \int \left\{ \varphi \left( \tau_l \; v \right)\right\}^2 D v,
\end{equation}
where $\tau_l$ in equation (\ref{eq:am3}) depends on $\stackrel{l-1}A$ and
\begin{equation}
 Dv = \frac 1{\sqrt{2 \pi}} \exp
 \left\{ -\frac{v^2}2 \right\} dv.
\end{equation}

Since equation (\ref{eq:am120180618}) gives the transformation from
$\stackrel{l-1}{\bm{x}}$ to $\stackrel{l}{\bm{x}}$, we study how a small
difference $d \stackrel{l-1}{\bm{x}}$ in the input develops to give difference $d \stackrel{l}{\bm{x}}$ in the output.  By differentiating equation (\ref{eq:am120180618}), we have
\begin{equation}
 d \stackrel{l}{\bm{x}} = \stackrel{l}{\bm{B}} d \stackrel{l-1}{\bm{x}}
\end{equation}
where 
\begin{equation}
\stackrel{l}{\bm{B}} = 
 \frac{\partial \stackrel{l}{\bm{x}}}{\partial \stackrel{l-1}{\bm{x}}}
\end{equation}
is a 
matrix whose $(i_l, i_{i-1})$-th element is given by
\begin{equation}
 B^{i_l}_{i_{l-1}} = \varphi' \left(u_{i_l}\right)
 w^{i_l}_{i_{l-1}}.
\end{equation}
It is a random variable of order $1/\sqrt{n_{l-1}}$.  Here and hereafter, we denote $\stackrel{l}{w_{ij}}$ by $w^{i_l}_{i_{l-1}}$, eliminating superfix $l$ and using $i_l$ and $i_{l-1}$ instead of $i$ and $j$.  These index notations are convenient for showing that the corresponding $w$'s belong to layer $l$.

We show how the square of the Euclidean length of $d
 \stackrel{l}{\bm{x}}$,
\begin{equation}
 d {\stackrel{l}{s}}\; \mbox{}^2 = \sum_{l_i} 
  \left( d x_{l_i}\right)^2, 
\end{equation}
is related to that of $d \stackrel{l-1}{\bm{x}}$.  This 
relation can be seen from
\begin{equation}
 \label{eq:am1120180613}
 d \stackrel{l}{s}\mbox{}^2 = \sum_{i_l, i_{l-1}, i'_{l-1}}
  B^{i_l}_{i_{l-1}}
  B^{i_l}_{i'_{l-1}} d  
  x_{i_{l-1}}
  d x_{i'_{l-1}}.
\end{equation}
For any pair $i_{l-1}$ and $i'_{l-1}$, $n_l$ random variables
$B^{i_l}_{i_{l-1}} B^{i_l}_{i'_{l-1}}$ are iid for all $i_l$ when
$\stackrel{l-1}{\bm{x}}$ is fixed, so the law of large numbers
guarantees that
\begin{equation}
 \label{eq:am1220180606}
 \sum_{i_l} B^{i_l}_{i_{l-1}} B^{i_l}_{i'_{l-1}} =
  n_l {\rm{E}} \left[ \varphi' \left(u_{i_l}\right)^2 w^{i_l}_{i_{l-1}}
   w^{i_l}_{i'_{l-1}}\right] + O_p
 \left( \frac 1{\sqrt{n_l}}\right), 
\end{equation}
where ${\rm{E}}$ is the expectation with respect to the weights and
biases and $O_p (1/\sqrt{n})$ represents small terms of stochastic order
$1/\sqrt{n}$.  We use the mean field property that $\varphi'
\left(u_{i_l}\right)$ has the self-averaging property and the average of
the product of $\varphi' \left(u_{i_l}\right)^2$ and $w^{i_l}_{i_{l-1}}
w^{i_l}_{i'_{l-1}}$ in equation (\ref{eq:am1220180606}) splits as
\begin{equation}
 {\mathrm{E}} \left[ \varphi' \left(u_{i_l}\right)^2 \right] 
  {\mathrm{E}} 
 \left[ w^{i_l}_{i_{l-1}} w^{i_l}_{i'_{l-1}}\right].
\end{equation}
This is justified in appendix I.  By putting
\begin{equation}
 \stackrel{l}{\chi} = \sigma^2_l \int \left\{
  \varphi' \left( \tau_l v \right)\right\}^2 D v,
\end{equation}
we have from equation (\ref{eq:am1120180613})
\begin{equation}
 \label{eq:am14}
 d \stackrel{l}{s}\mbox{}^2 = \stackrel{l}{\chi} d \stackrel{l-1}{s^2},
\end{equation}
by using
\begin{equation}
 {\mathrm{E}} \left[
 w^{i_l}_{i_{l-1}} \;  w^{i_l}_{i'_{l-1}}
 \right] =
 \frac{\sigma^2_l}{n_l} \delta_{i_{l-1} i'_{l-1}}.
\end{equation}
Here $\stackrel{l}{\chi}$ which depends on
$\stackrel{l-1}{A}$, is the enlargement factor showing how $d
\stackrel{l-1}{\bm{x}}$ is enlarged or reduced across layer $l$.

From the recursive relation (\ref{eq:am14}), we have
\begin{eqnarray}
 d \stackrel{L}{s^2} &=& \chi^L_l  d \stackrel{l-1}{s^2}, \\
 \chi^L_l &=& \stackrel{L}{\chi} \; \stackrel{L-1}{\chi} \cdots 
 \stackrel{l}{\chi}.
\end{eqnarray}
Assume that all the $\stackrel{l}{\chi}$ are equal.  Then, it gives the
Lyapunov exponent of dynamics equation (\ref{eq:am14}).  When it is larger than
1, the length diverges as the layers proceed, whereas 
when it is smaller than 1, the length decays to 0.  
The dynamics of $d \stackrel{l}{s^2}$ is
chaotic when $\stackrel{l}{\chi} > 1$ (Poole et al, 2016).  Interesting
information processing takes place at the edge of chaos, where
$\chi^L_l$ is nearly equal to 1 (Yang \& Schoenholz, 2017).  We have
interest in the case where $\chi^L_l$ is nearly equal to 1, but each $\stackrel{l}{\chi}$'s are distributed, some being smaller than 1 and the others larger than 1.

\section{Fisher information of deep networks and natural gradient learning}

We study a regression model in which the output of layer $L$, $\stackrel{L}{\bm{x}}= \varphi
(\stackrel{L}{\bm{u}})$,
\begin{equation}
 \label{eq:am1920180731}
 {\bm{y}} = \stackrel{L}{\bm{x}} + {\bm{\varepsilon}},
\end{equation}
where $\bm{\varepsilon} \sim N(0, {\bm{I}})$ is a multivariate Gaussian
random variable with mean 0 and identity covariance matrix ${\bm{I}}$.
Then the probability of ${\bm{y}}$ given input $\bm{x}$ is
\begin{equation}
 p({\bm{y}}|{\bm{x}};{\bm{W}}) = \frac 1{\left(\sqrt{2
					  \pi}\right)^{n_L}}
 \exp \left\{ -\frac 12 \left|{\bm{y}}- \stackrel{L}{\bm{x}} \right|^2 \right\},
\end{equation}
where ${\bm{W}}$ consists of all the parameters $\stackrel{l}{\bm{w}}$,
and $\stackrel{l}{b}$, $l=1, \cdots, L$.  The Fisher information matrix
is given by
\begin{equation}
 \label{eq:am2120180731}
 {\bm{G}} = {\rm{E}}_{\bm{x}, {\bm{y}}} \left[ \left(\partial_{\bm{W}} \log p \right)
  \left(\partial_{\bm{W}} \log p \right)\right],
\end{equation}
where ${\rm{E}}_{{\bm{x}}, {\bm{y}}}$ denotes the expectation with respect to
randomly generated input ${\bm{x}}$ and resultant ${\bm{y}}$ and $\partial_{\bm{W}} = \partial / \partial{\bm{W}}$ is gradient with respect to ${\bm{W}}$.  By using error vector ${\bm{\varepsilon}}$ in (\ref{eq:am1920180731}), we have
\begin{equation}
 \partial_{\bm{W}} \log p =
 {\bm{\varepsilon}} \cdot \partial_{\bm{W}} 
 \stackrel{L}{\bm{x}}.
\end{equation}
For fixed ${\bm{x}}$, expectation with respect to ${\bm{y}}$ is replaced by that of ${\bm{\varepsilon}}$, where ${\rm{E}} \left[ {\bm{\varepsilon}}{\bm{\varepsilon}}\right]= {\bm{I}}$.  Hence, (\ref{eq:am2120180731}) is given by
\begin{equation}
 \label{eq:am2120180508}
 {\bm{G}} = {\rm{E}}_{\bm{x}} \left[ \sum_{i_L} \left\{
 \partial_{\bm{W}} \varphi \left(u_{i_L}\right) \right\}
 \left\{ \partial_{\bm{W}} \varphi \left(u_{i_L}\right)\right\}
 \right].
\end{equation}
Here, we use the dyadic or tensor notation that ${\bm{ab}}$ implies a
matrix $\left(a_i b_j \right)$, instead of vector-matrix notation
${\bm{a}}{\bm{b}}^T$ for column vectors.

Online learning is a method of modifying the current ${\bm{W}}$ such that the current loss
\begin{equation}
 l = \frac 12 \left| {\bm{y}} - {\bm{x}}^L_t \right|^2
\end{equation}
decreases, where $\left({\bm{x}}_t, {\bm{y}}_t \right)$ is the current input-output pair.  The stochastic gradient decent method (proposed in Amari, 1967) uses the gradient of $l$ to modify ${\bm{W}}$,
\begin{equation}
 \Delta {\bm{W}} = -\eta \frac{\partial l}{\partial {\bm{W}}}.
\end{equation}
Historically, the first simulation results applied to four-layer networks for pattern classification were given in a Japanese book (Amari, 1968).  The minibatch method uses the average of $\partial l / \partial{\bm{W}}$ over minibatch samples.  

The negative gradient is a direction to decrease the current loss but is not steepest in a Riemannian manifold.  The true steepest direction is given by
\begin{equation}
 \tilde{\nabla} l = {\bm{G}}^{-1}
  \frac{\partial l}{\partial {\bm{W}}},
\end{equation}
which is called the natural or Riemannian gradient (Amari, 1998).  The natural gradient method is given by
\begin{equation}
 \Delta {\bm{W}} = -\eta \tilde{\nabla} l.
\end{equation}
It is known to be Fisher efficient for estimating ${\bm{W}}$ (Amari, 1998).  Although it gives excellent performances, the inversion of ${\bm{G}}$ is computationally very difficult.

To avoid difficulty, the quasi-diagonal natural gradient method was proposed in Ollivier (2015) and was shown to be very efficient in Marcereau-Caron and Ollivier (2016).  A recent proposal (Ollivier, 2017) looks very promising for realizing natural gradient learning.  The present paper analyzes the structure of the Fisher information matrix.  It will give a justification of the quasi-diagonal natural gradient method.  By using it, we propose a new method of realizing natural gradient learning without the burden of inverting ${\bm{G}}$.

\section{Structure of Fisher information matrix}

To calculate elements of ${\bm{G}}$, we use a new notation combining connection weights ${\bm{w}}$ and bias $b$ into one vector,
\begin{equation}
 \stackrel{l}{\bm{w}^{\ast}} = \left( \stackrel{l}{\bm{w}},
				\stackrel{l}{b}\right). 
\end{equation}
For the $i_{l}$-th unit of layer $l$, it is
\begin{equation}
 {\bm{w}}^{\ast}_{i_l} = \left(
 w^{i_l}_{i_{l-1}}, b^{i_l} \right).
\end{equation}
For $l>m$, we have the recursive relation
\begin{equation}
 \frac{\partial \stackrel{l}{\bm{x}}}{\partial
  \stackrel{m}{\bm{w}^{\ast}}}
 = \frac{\partial \stackrel{l}{\bm{x}}}{\partial \stackrel{l-1}{\bm{x}}} \; 
  \frac{\partial \stackrel{l-1}{\bm{x}}}{\partial
  \stackrel{m}{\bm{w}^{\ast}}} = \stackrel{l}{\bm{B}}
 \frac{\partial \stackrel{l-1}{\bm{x}}}{\partial \stackrel{m}{\bm{w}^{\ast}}}.
\end{equation}
Starting from $l= L$ and using 
\begin{equation}
 \frac{\partial \stackrel{m}{\bm{x}}}{\partial
  \stackrel{m}{\bm{w}^{\ast}}}
  = \varphi' \left(\stackrel{m}{\bm{u}}\right)
  \stackrel{m-1}{\bm{x}},
\end{equation}
we have
\begin{equation}
 \frac{\partial \stackrel{L}{\bm{x}}}{\partial
  \stackrel{m}{\bm{w}^{\ast}}} = \stackrel{L}{\bm{B}} \cdots
 \stackrel{m+1}{\bm{B}} \varphi' \left( \stackrel{m}{\bm{u}}\right)
 \stackrel{m-1}{\bm{x}}.
\end{equation}
Put
\begin{equation}
 {\bm{B}}^L_{m+1} = \stackrel{L}{\bm{B}} \cdots
 \stackrel{m+1}{\bm{B}},
\end{equation}
which is a product of $L-(m-1)$ matrices.  The elements of ${\bm{B}}^L_{m+1}$ are denoted by $B^{i_L}_{i_m}$.  

We calculate the Fisher information ${\bm{G}}$ given in
equation (\ref{eq:am2120180508}).  The elements of ${\bm{G}}$
with respect to layers $l$ and $m$ are written as
\begin{equation}
 \label{eq:am3020180731}
 {\bm{G}} \left( \stackrel{m}{\bm{w}^{\ast}},
	   \stackrel{l}{\bm{w}^{\ast}}\right) =
 {\rm{E}}_{\bm{x}} \left[ \frac{\partial \stackrel{L}{\bm{x}}}{\partial
	   \stackrel{m}{\bm{w}^{\ast}}} \cdot
 \frac{\partial \stackrel{L}{\bm{x}}}{\partial \stackrel{l}{\bm{w}^{\ast}}}
  \right],
\end{equation}
where $\cdot$ denotes the innor product with respect to
$\stackrel{L}{\bm{x}}$. The $\left(i_m, i_{m-1} \right)$ emelents of $\partial \stackrel{L}{\bm{x}} / \partial \stackrel{m}{{\bm{w}}^{\ast}}$ are, for fixed $i_L$,
\begin{equation}
 B^{i_L}_{i_m} \varphi' \left( u_{i_m} \right)
  x_{i_{m-1}}.
\end{equation}
Hence, (\ref{eq:am3020180731}) is written in the component form as
\begin{equation}
 \left[ {\bm{G}} \left( \stackrel{m}{\bm{w}^{\ast}}, \stackrel{l}{{\bm{w}}^{\ast}}\right) \right]^{i_m i_l}_{i_{m-1} i_{l-1}} =
  \sum_{i_L} B^{i_L}_{i_m} B^{i_L}_{i_l} \varphi'
  \left( u_{i_m} \right) \varphi' 
  \left( u_{i_l} \right) x_{i_{m-1}} x_{i_{l-1}}.
\end{equation}
We first consider the case $m=l$, that is, two neurons are in the same layer $m$.  The following lemma is usuful for evaluating $\sum B^{i_L}_{i_m} B^{i_L}_{i'_m}$.

{\textbf{Domino Lemma}} \quad  We assume that all $n_l$ are of order $n$.
\begin{equation}
 \label{eq:am28}
 \sum_{i_L, i'_L} \delta_{i_L i'_L}  B^{i_L}_{i_m}
  B^{i'_L}_{i'_m} =
 \chi^L_{m+1} \delta_{i_m i'_m} + O_p
 \left( \frac 1{\sqrt{n}}\right).
\end{equation}

\begin{proof}
We first prove the case with $m=L-1$.  We have
\begin{equation}
 \label{eq:am29}
 \sum \delta_{i_L i'_L} B^{i_L}_{i_{L-1}} B^{i'_L}_{i'_{L-1}} =
 \sum_{i_L} \left\{ \varphi' \left(u_{i_L}\right)\right\}^2
 w^{i_L}_{i_{L-1}} w^{i_L}_{i'_{L-1}}.
\end{equation}
When $i_{L-1}= i'_{L-1}$, this is a sum of $n_L$ iid random variables
$\left\{ \varphi' \left(u_{i_L}\right)\right\}^2
\left(w^{i_L}_{i_{L-1}}\right)^2$, when input ${\bm{x}}$ is fixed.
Therefore, the law of large numbers guarantees that, as $n_L$ goes to
infinity, 
their sum
converges to the expectation,
\begin{equation}
 n_L {\rm{E}}_{\bm{x}} \left[ \left\{ \varphi' \left(u_{i_L}\right)\right\}^2
  \left(w^{i_L}_{i_{L-1}}\right)^2 \right] = \stackrel{L}{\chi}　\label{MFA}
\end{equation}
under the mean field approximation for any $i_L$.  For fixed $i_{L-1} \ne i'_{L-1}$,
the right-hand side of equation (\ref{eq:am29}) is also a sum of iid variables with mean 0.  Hence, its
mean is 0.  We evaluate its variance, proving that the variance is
\begin{equation}
 n_L {\rm{E}} \left[ \left\{ \varphi' \left(u_{i_L}\right)\right\}^4
 \left(w^{i_L}_{i_{L-1}}\right)^2 
  \left(w^{i_L}_{i'_{L-1}}\right)^2
 \right]
\end{equation}
which is of order $1/n_L$, because
${\rm{E}}\left[\left(w^{i_L}_{i_{L-1}}\right)^2\right]$ is of order
$1/n_L$.  Hence we have
\begin{equation}
 \label{eq:am32}
  \sum \delta_{i_L i'_L} B^{i_L}_{i_{L-1}} B^{i'_L}_{i'_{L-1}} =
 \stackrel{L}{\chi} \delta_{i_{L-1} i'_{L-1}} + O_p
 \left( \frac 1{\sqrt{n_L}}\right).
\end{equation}
When $m<L-1$, we repeat the process $L-1, \cdots$.  Then $\delta_{i_L i'_L}$ in the left-hand side of equation (\ref{eq:am28}) propagates to give $\delta_{i_m i'_m}$ like the domino effect, leaving multiplicative factors $\stackrel{l}{\chi}$. This proves the theorem.
\end{proof}

\begin{flushleft}
{\textbf{Remark:}} The domino lemma holds irrespective of $n_l>n_{l-1}$
 or $n_l < n_{l-1}$, provided they are large.  However, matrix
 ${\bm{B}}^L_m$ is not of full rank, its rank being $\min \left\{n_L,
 \cdots, n_m \right\}$.
\end{flushleft}

By using this result, we evaluate off-diagonal
blocks of ${\bm G}$ under the mean field approximation (\ref{MFA}).

\begin{theorem}\upshape
The Fisher information matrix ${\bm{G}}$ is unit-wise diagonal except
for terms of stochastic order $O_p(1/\sqrt{n})$.
\end{theorem}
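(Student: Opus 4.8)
The plan is to start from the component form of a general block,
\[
 \left[ {\bm{G}} \left( \stackrel{m}{\bm{w}^{\ast}}, \stackrel{l}{{\bm{w}}^{\ast}}\right) \right]^{i_m i_l}_{i_{m-1} i_{l-1}} =
 {\rm{E}}_{\bm{x}} \left[ \sum_{i_L} B^{i_L}_{i_m} B^{i_L}_{i_l} \varphi' \left( u_{i_m} \right) \varphi' \left( u_{i_l} \right) x_{i_{m-1}} x_{i_{l-1}} \right],
\]
and to show that every block not belonging to a single unit is $O_p(1/\sqrt{n})$. There are two situations to dispose of: two units sitting in different layers ($m \neq l$), and two distinct units sitting in the same layer ($m = l$, $i_m \neq i_l$). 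The common engine for both is the Domino Lemma, which collapses the sum over the top index $i_L$ of a product of two path factors into a Kronecker delta on the lower indices, up to $O_p(1/\sqrt{n})$. In each case the factors $\varphi'(u_{i_m})$, $\varphi'(u_{i_l})$, $x_{i_{m-1}}$, $x_{i_{l-1}}$ are of order $1$ and, together with the ${\rm{E}}_{\bm{x}}$ average, only multiply the estimate by an $O(1)$ constant, so it suffices to control the path sum.

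First I would treat the same-layer case $m = l$. Here the path factors are already $B^{i_L}_{i_m}$ and $B^{i_L}_{i'_m}$, so the Domino Lemma applies verbatim and gives $\sum_{i_L} B^{i_L}_{i_m} B^{i_L}_{i'_m} = \chi^L_{m+1} \delta_{i_m i'_m} + O_p(1/\sqrt{n})$. For $i_m \neq i'_m$ the delta vanishes and the whole block is $O_p(1/\sqrt{n})$; for $i_m = i'_m$ one retains the genuinely diagonal single-unit block $\chi^L_{m+1}\, {\rm{E}}_{\bm{x}}\!\left[ \left\{ \varphi'(u_{i_m}) \right\}^2 x_{i_{m-1}} x_{i'_{m-1}} \right]$, which is exactly the piece the theorem allows to survive.

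Next I would handle different layers, say $m < l$. The idea is to split the long path at layer $l$, using $\bm{B}^L_{m+1} = \bm{B}^L_{l+1}\bm{B}^l_{m+1}$, i.e.\ $B^{i_L}_{i_m} = \sum_{j_l} B^{i_L}_{j_l} B^{j_l}_{i_m}$, where $B^{j_l}_{i_m}$ is the element of the partial product $\bm{B}^l_{m+1}$. Substituting and exchanging the order of summation brings the shared upper segment into the Domino Lemma form $\sum_{i_L} B^{i_L}_{j_l} B^{i_L}_{i_l} = \chi^L_{l+1}\delta_{j_l i_l} + O_p(1/\sqrt{n})$, and the sum over $j_l$ then collapses the block to $\chi^L_{l+1}\, {\rm{E}}_{\bm{x}}\!\left[ B^{i_l}_{i_m}\varphi'(u_{i_m})\varphi'(u_{i_l}) x_{i_{m-1}} x_{i_{l-1}}\right]$ plus an $O_p(1/\sqrt{n})$ remainder. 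It then remains to show that the leftover lower-segment factor $B^{i_l}_{i_m}$ is itself $O_p(1/\sqrt{n})$. Since the weights have mean zero, ${\rm{E}}[B^{i_l}_{i_m}] = 0$, and a path-counting variance estimate — only two paths that coincide edge by edge survive the weight expectation, since any unmatched weight enters to the first power and averages to zero — gives ${\rm{E}}[(B^{i_l}_{i_m})^2]$ of order $\chi^l_{m+1}/n$ after the mean-field splitting and ${\rm{E}}[w^2] = \sigma^2_l/n_{l-1}$ at each layer. Hence $B^{i_l}_{i_m} = O_p(1/\sqrt{n})$ and the entire different-layer block is $O_p(1/\sqrt{n})$.

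The main obstacle I anticipate is controlling the leftover partial-product factor $B^{i_l}_{i_m}$ through an arbitrary number $l - m$ of layers: a priori a product of many $O(1/\sqrt{n})$ matrices could accumulate, and one must check that the $\chi$-factors, which are $O(1)$ at the edge of chaos, keep the element at the single-layer order $1/\sqrt{n}$ rather than inflating it. The clean way to see this is the variance identity ${\rm{E}}[(B^{i_l}_{i_m})^2]$ of order $\chi^l_{m+1}/n$ above, equivalently the observation from $d \stackrel{l}{s}\mbox{}^2 = \chi^L_l\, d \stackrel{l-1}{s^2}$ that $\sum_{i_l}(B^{i_l}_{i_m})^2$ stays $O(1)$ while spread over $n$ entries, pinning each entry at $1/\sqrt{n}$. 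A secondary point, requiring the same mean-field self-averaging as in the Domino Lemma, is that the ${\rm{E}}_{\bm{x}}$ average does not conspire with the random weights to lift the order; but since the retained factors are bounded and the path factor concentrates, the averaging only contributes an $O(1)$ constant.
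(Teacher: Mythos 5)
Your proposal is correct and follows essentially the same route as the paper: the same-layer off-diagonal blocks are killed directly by the Domino Lemma, and the cross-layer blocks are handled by splitting $B^{i_L}_{i_m} = \sum_{i_l} B^{i_L}_{i_l} B^{i_l}_{i_m}$, applying the Domino Lemma to the shared upper segment, and then showing the leftover factor $B^{i_l}_{i_m}$ has mean zero and variance $O(1/n)$. Your path-counting justification of that variance bound is in fact somewhat more explicit than the paper's, which simply asserts that $B^{i_l}_{i_m}$ is a sum of $n^{l-m}$ zero-mean iid variables of variance $O(1/n^{l-m+1})$.
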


\begin{proof}
We first calculate the off-diagonal blocks of the Fisher information matrix within the same layers.  The Fisher information submatrix within layer $m$ is
\begin{equation}
 {\bm{G}} \left( \stackrel{m}{{\bm{w}}^{\ast}},  \stackrel{m}{{\bm{w}}^{' \ast}} \right) =
 E_{\bm{x}} \left[ \sum \delta_{i_L i'_L}  B^{i_L}_{i_m}
  B^{i_L}_{i'_m} \varphi' \left(u_{i_m}\right)
 \varphi' \left(u_{i'_m}\right) x_{i_{m-1}} x_{i'_{m-1}}
 \right],
\end{equation}
which are elements of submatrix of ${\bm{G}}$ corresponding to neurons $i_m$ and $i'_m$ both in the same layer $m$.  By the domino lemma, we have
\begin{equation}
 G \left( \stackrel{m}{{\bm{w}}^{\ast}}, \stackrel{m}{{\bm{w}}^{' \ast}}  \right) =
 {\rm{E}}_{\bm{x}} \left[ \chi^L_m 
  \left\{ \varphi' \left(u_{i_m}\right)\right\}^2
 x_{i_{m-1}} x_{i'_{m-1}}\right] \delta_{i_m i'_m} +
 O_p \left(\frac 1{\sqrt{n}}\right).
\end{equation}
This shows that 
the submatrix
is unit-wise block diagonal: that is, the blocks of
 different neurons $i_m$ and $i'_m \; \left(i_m \ne i'_m \right)$ are 0
 except for terms of order $1/\sqrt{n}$.

We next study the blocks of different layers $l$ and $m \; (m<l)$,
\begin{equation}
 {\bm{G}}\left( \stackrel{\ast}{{\bm{w}}^l},  
 \stackrel{\ast}{{\bm{w}}^m} \right) =
 {\rm{E}}_{\bm{x}} \left[ \sum_{i_L, i'_L} \delta_{i_L i'_L}
  B^{i_L}_{i_l}
   B^{i'_L}_{i_m} \varphi'
 \left(u_{i_l}\right) \varphi' \left(u_{i_m}\right)
 \stackrel{l-1}{\bm{x}} \; \stackrel{m-1}{\bm{x}}\right].
\end{equation}
We have
\begin{equation}
  B^{i_L}_{i_m} = \sum_{i_l}  B^{i_L}_{i_l}
  B^{i_l}_{i_m}.
\end{equation}
By using the domino lemma, ${\bm{G}}$ is written as
\begin{equation}
 \label{eq:am4220180613}
 {\rm{E}}_{\bm{x}} \left[ \chi^L_l B^{i_l}_{i_m} \varphi'
 \left(u_{i_l} \right) \varphi' \left(u_{i_m} \right)
 \stackrel{l-1}{\bm{x}} \; \stackrel{m-1}{\bm{x}} \right].
\end{equation}
When $m=l-1$,
\begin{equation}
 B^{i_l}_{i_{m-1}} = \varphi' \left(u_{i_l}\right)
 w^{i_l}_{i_{m-1}}
\end{equation}
and hence it is of order $1/\sqrt{n}$.  In general,
$B^{i_l}_{i_m}$ is a sum of $n^{l-m}\; 0$ mean iid random
variables with variance of order $1/n^{l-m+1}$.  Hence, its mean is 0 and
variance is of order $1/n$, proving that (\ref{eq:am4220180613}) is of
 order $1/\sqrt{n}$.
\end{proof}

Inspired from this, we define a new metric ${\bm{G}}^{\ast}$ as an
approximation of ${\bm{G}}$, such that all the off-diagonal block terms
of ${\bm{G}}$ are discarded, putting them equal to 0.  We study the
natural (Riemannian) gradient method which uses ${\bm{G}}^{\ast}$ as the
Riemannian metric.  Note that ${\bm{G}}^{\ast}$ is an approximation of
${\bm{G}}$, ${\bm{G}}$ tending to ${\bm{G}}^{\ast}$ for $n \rightarrow
\infty$ in the max-norm, but ${\bm{G}}^{\ast-1}$ is not a good approximation to ${\bm{G}}^{\ast}$.  This is because the max-norm of a matrix is not sub-multiplicative.  See the remark below.

{\textbf{Remark:}} \quad One should note that the approximately block
diagonal structure is not closed in the matrix multiplication and
inversion.  Even though ${\bm{G}}$ is approximately unitwise block
diagonal, its square is not, as is shown in the following.  For
simplicity, we assume that
\begin{equation}
 {\bm{G}} = {\bm{I}} + \frac 1{\sqrt{n}}{\bm{B}},
\end{equation}
where $\bm{I}$ is an identity matrix and ${\bm{B}}= \left(b_{ij}\right)$ is a random matrix of order 1,
$b_{ij}$ being independent random variables subject to $N(0, 1)$.  Then 
\begin{equation}
 {\bm{G}}^2 = {\bm{I}} + \frac{2{\bm{B}}}{\sqrt{n}} +
 \frac 1n {\bm{B}}^2.
\end{equation}
Here the $(i, j)$-th element of ${\bm{B}}^2$ is
\begin{equation}
 \sum_k B_{ik} B_{kj},
\end{equation}
a sum of $n$ independent random variables.  Hence, although its mean is
0, it is of order 1.  Hence, the off-diagonal elements are no more
small.  The same situation holds for ${\bm{G}}^{-1}$.

We may also note that the Riemannian magnitude of vector ${\bm{a}}$,
\begin{equation}
 {\bm{a}}^T {\bm{G}}{\bm{a}} = \sum G_{ij} a_i a_j
\end{equation}
is not approximated by ${\bm{a}}^T {\bm{G}}^{\ast}{\bm{a}}$, because we
cannot neglect the off-diagonal elements of ${\bm{G}}$.

Recently, Karakida, Akaho $\&$ Amari (2018) analyzed characteristics of the original metric ${\bm{G}}$  (not ${\bm{G}}^{\ast}$).  They evaluated the traces of ${\bm{G}}$ and ${\bm{G}}^2$ to analyze the distribution of eigenvalues of ${\bm{G}}$, which proves that the small off-diagonal elements cause a long-tail distribution of eigenvalues. 
This elucidates the landscape of the error surface in a random deep net.  In contrast, the present study focuses on the approximated metric ${\bm{G}}^*$.  It enables us to give an explicit form of the Fisher information matrix, directly applicable to natural gradient methods, as follows.

\section{Unit-wise Fisher information}

Because ${\bm{G}}^{\ast}$ is unit-wise block-diagonal, it is enough to
calculate the Fisher information matrices of single units.  We assume
that its input vector ${\bm{x}}$ is subject to $N(0, {\bm{I}})$.  This does not hold in general.  However, it holds approximately for a randomly connected resnet, as is shown in the next section. 

Let us
introduce a new $(n + 1)$-dimensional vectors for a single unit:
\begin{eqnarray}
 {\bm{w}}^{\ast} &=& ({\bm{w}}, w_0), \\
 {\bm{x}}^{\ast} &=& ({\bm{x}}, x_0),
\end{eqnarray}
where $w_0=b$ and $x_0=1$.  Then, the output of the unit is $\varphi (u)
= \varphi({\bm{w}}^{\ast} \cdot {\bm{x}}^{\ast})$, $u={\bm{w}}^{\ast} \cdot {\bm{x}}^{\ast}$.  The Fisher information matrix is an $(n+1)\times (n+1)$ matrix written as
\begin{equation}
 {\bm{G}} = {\rm{E}}_{\bm{x}} \left[ 
 \left\{ \varphi'(u)\right\}^2 
  {\bm{x}}^{\ast} {\bm{x}}^{\ast}
 \right].
\end{equation}

We introduce a set of new $n+1$ orthonormal basis vectors in the space
of ${\bm{x}}^{\ast} = \left({\bm{x}}, x_0 \right)$ as
\begin{eqnarray}
 {\bm{e}}^{\ast}_0 &=& (0, \cdots, 0, 1), \\
 {\bm{e}}^{\ast}_i &=& ({\bm{a}}_i, 0), \quad
  i=1, 2, \cdots, n-1, \\
 {\bm{e}}^{\ast}_n &=& \frac 1w ({\bm{w}}, 0), \quad
  w^2 = {\bm{w}} \cdot {\bm{w}},
\end{eqnarray}
where ${\bm{a}}_i, i=1, \cdots n$, are arbitrary orthogonal unit vectors, satisfying ${\bm{a}}_i
\cdot {\bm{w}}=0$, ${\bm{a}}_i \cdot {\bm{a}}_j = \delta_{ij}$ .
That is, $\left\{ {\bm{e}}^{\ast}_1, \cdots, {\bm{e}}^{\ast}_n \right\}$ is a rotation of $\left\{ {\bm{e}}_1, \cdots, {\bm{e}}_n \right\}$ and we put ${\bm{e}}_0={\bm{e}}^{\ast}_0$.

Here $\left\{{\bm{e}}^{\ast}_i \right\}, i=0, 1, \cdots, n, n+1$ are
mutually orthogonal unit vectors and ${\bm{e}}^{\ast}_n$ is the unit
vector in the direction of ${\bm{w}}$.  Since ${\bm{x}}^{\ast}$ and
${\bm{w}}^{\ast}$ are represented in the new basis as
\begin{equation}
 {\bm{x}}^{\ast} = \sum^n_{i=0} x^{\ast}_i {\bm{e}}^{\ast}_i, \quad
 {\bm{w}}^{\ast} = b {\bm{e}}^{\ast}_0 + w {\bm{e}}^{\ast}_n,
\end{equation}
we have
\begin{equation}
 \label{eq:am5520180618}
 {\bm{G}} = {\rm{E}} \left[
  \left\{ \varphi' \left( {\bm{w}}^{\ast} \cdot
		    {\bm{x}}^{\ast} \right)\right\}^2 
  {\bm{x}}^{\ast}{\bm{x}}^{\ast}
 \right].
\end{equation}
Moreover, $\left(x^{\ast}_1, \cdots, x^{\ast}_n \right)$ are orthogonal
transformation of ${\bm{x}}= \left(x_1, \cdots, x_n\right)$.  Hence,
$x^{\ast}_i, i=1, \cdots, n$, are jointly independent Gaussian, subject
to $N(0, 1)$, and $x^{\ast}_0=1$.

In order to obtain ${\bm{G}}$, let us put 
\begin{equation}
 {\bm{G}} = \sum^n_{i, j=0} A_{ij}{\bm{e}}^{\ast}_i
  {\bm{e}}^{\ast}_j
\end{equation}
in the dyadic notation.  Then, the coefficients $A_{ij}$ are given by
\begin{equation}
 A_{ij}={\bm{e}}^{\ast}_i {\bm{G}}{\bm{e}}^{\ast}_j,
\end{equation}
which are elements of ${\bm{G}}$ in the coordinate system $\left\{ {\bm{e}}^{\ast}_i \right\}$.  From ${\bm{w}}^{\ast} \cdot {\bm{x}}^{\ast}=wx_n+w_0$ and
equation (\ref{eq:am5520180618}), we have
\begin{eqnarray}
 \label{eq:am5020180511}
 A_{00} &=& {\bm{e}}^{\ast}_0 {\bm{G}} {\bm{e}}^{\ast}_0 = 
 \int \left\{ \varphi' \left(w {x}^{\ast}_n + w_0 \right)\right\}^2
  D x^{\ast}_n, \\
 \label{eq:am5120180511}
 A_{0n} &=& {\bm{e}}^{\ast}_0 {\bm{G}} {\bm{e}}^{\ast}_n =
 \int {x}^{\ast}_n \left\{ \varphi' \left(w {x}^{\ast}_n + w_0
				   \right)\right\}^2 D {x}^{\ast}_n, \\
 \label{eq:am5220180511}
 A_{nn} &=& {\bm{e}}^{\ast}_n {\bm{G}} {\bm{e}}^{\ast}_n = 
 \int {x}^{\ast 2}_n \left\{ \varphi' \left(w {x}^{\ast}_n + w_0
		     \right)\right\}^2 D {x}^{\ast}_n,
\end{eqnarray}
which depend on $\left({\bm{w}}, w_0 \right)$.  We further have, for
$i=1, \cdots, n-1$,
\begin{eqnarray}
  A_{ii} &=& {\bm{e}}^{\ast}_i {\bm{G}} {\bm{e}}^{\ast}_i = A_{00}, \\
  A_{ij} &=& {\bm{e}}^{\ast}_i {\bm{G}} {\bm{e}}^{\ast}_j = 0 \quad (j \ne i), \\
  A_{i0} &=& {\bm{e}}^{\ast}_i {\bm{G}} {\bm{e}}^{\ast}_0 = 0 \quad (i \ne n).
\end{eqnarray}

From these, we obtain ${\bm{G}}$ in the dyadic form
\begin{eqnarray}
 \label{eq:am5520180509}
 {\bm{G}} &=& A_{00} \sum^n_{i=0} {\bm{e}}^{\ast}_i {\bm{e}}^{\ast}_i
  + \left(A_{nn}-A_{00}\right) {\bm{e}}^{\ast}_n {\bm{e}}^{\ast}_n 
  \nonumber \\
 &&\qquad\quad\  + A_{0n} \left( {\bm{e}}^{\ast}_0 {\bm{e}}^{\ast}_n
   + {\bm{e}}^{\ast}_n {\bm{e}}^{\ast}_0 \right).
\end{eqnarray}
The elements of ${\bm{G}}$ in the basis $\left\{ {\bm{e}}^{\ast}_1,
\cdots, {\bm{e}}^{\ast}_n, {\bm{e}}^{\ast}_0 \right\}$ are 
\begin{equation}
 \label{eq:am5620180509}
 {\bm{G}} = \left[
 \begin{array}{c:c}
 \begin{matrix}
  A_{00} & \mbox{}& 0  \\
  \mbox{} & \ddots & \mbox{} \\
  0 & \mbox{} & A_{00} 
 \end{matrix} & \text{\Large{\:0}} \\
  \hdashline %
 \text{\rule{0pt}{17pt}\Large{0}}  &
 \begin{matrix}
 A_{nn} & A_{n0}  \\
 A_{n0} & A_{00}.
 \end{matrix}
 \end{array}
  \right],
\end{equation}
which shows that ${\bm{G}}$ is a sum of a diagonal matrix and a rank 2 matrix.

The inverse of ${\bm{G}}$ has the same block form as
equations (\ref{eq:am5520180509}) and (\ref{eq:am5620180509}).  Note that
\begin{eqnarray}
 \sum {\bm{e}}^{\ast}_i {\bm{e}}^{\ast}_i &=& {\bm{I}}, \\
 {\bm{e}}^{\ast}_n {\bm{e}}^{\ast}_n &=& \frac 1{w^2}{\bm{w}}{\bm{w}},
  \\
 \label{eq:am5920180511}
 {\bm{e}}^{\ast}_0 {\bm{e}}^{\ast}_n + {\bm{e}}^{\ast}_n {\bm{e}}^{\ast}_0
  &=& \frac{1}{w}
 \left[
\newlength{\myheighta}
\setlength{\myheighta}{2cm}
 \begin{array}{@{\,}ccc:c@{\,}}
 \parbox[c][\myheighta][c]{0cm}{} \mbox{} & 0 & \mbox{} & {\bm{w}} \\
  \hdashline %
 \mbox{} & {\bm{w}}^T & \mbox{} & 0
  \end{array}
 \right] 
 = \frac 1w \left({\bm{e}}_0 \tilde{\bm{w}} + \tilde{\bm{w}} {\bm{e}}_0  \right), \\
 {\bm{e}}^{\ast}_0 {\bm{e}}^{\ast}_0 &=& 
  \left[
\newlength{\myheight}
\setlength{\myheight}{2cm}
  \begin{array}{@{\,}ccc:c@{\,}}
   \parbox[c][\myheight][c]{0cm}{} \mbox{} &0 & \mbox{} & 0 \\
   \hdashline %
   \mbox{}& 0 & \mbox{} & 1
  \end{array}
 \right],
\end{eqnarray}
where $\tilde{\bm{w}}=({\bm{w}}, 0)$.

By using these relations, ${\bm{G}}$ is expressed in the original basis as
\begin{equation}
 {\bm{G}} = \sum A_{00} \; {\bm{I}} +
  \frac{\left(A_{nn}-A_{00} \right)}{w^2} 
  \tilde{\bm{w}} \tilde{\bm{w}} +
  \frac{A_{0n}}{w} \left( {\bm{e}}_0 \tilde{\bm{w}} +
   \tilde{\bm{w}} {\bm{e}}_0 \right).
\end{equation}
The inverse of ${\bm{G}}$ has also the same form, so we have an explicit form of ${\bm{G}}^{-1}$
\begin{eqnarray}
 {\bm{G}}^{-1} &=& \bar{A}_{00} {\bm{I}} + 
 \frac{X}{w^2} \tilde{\bm{w}} \tilde{\bm{w}} +
  \frac{Y}{w} \left(
  {\bm{e}}^{\ast}_0 \tilde{\bm{w}} + \tilde{\bm{w}}{\bm{e}}^{\ast}_0 \right) \\
  &&\qquad\quad + Z{\bm{e}}^{\ast}_0 {\bm{e}}^{\ast}_0,
\end{eqnarray}
where 
\begin{eqnarray}
 \label{eq:am7020180619}
 \bar{A}_{00} &=& \frac 1{A_{00}}, \\
 \label{eq:am7120180619}
 X &=& \frac 1D A_{00} -\bar{A}_{00}, \quad
 Y= \frac{-A_{n0}}D, \quad Z= \frac{A_{nn}}D-\bar{A}_{00}, \\
 \label{eq:am7220180619}
 D &=& A_{00} A_{nn}-A^2_{n0}.
\end{eqnarray}
By using the above equations, ${\bm{G}}^{-1}{\bm{x}}^{\ast}$ is obtained
explicitly, so we do not need to calculate back-propagated ${\bm{G}}$
and its inverse for the natural gradient update of ${\bm{W}}$.

The natural gradient method for each unit is written by using the
back-propagated error $e$ as
\begin{equation}
 \Delta {\bm{w}}^{\ast} = -\eta e {\bm{G}}^{-1}{\bm{x}}^{\ast},
\end{equation}
which splits as
\begin{eqnarray}
 \label{eq:am6720180509}
 \Delta {\bm{w}} &=& -\eta e \left[
 \bar{A}_{00}{\bm{x}} + \left( \frac{X}{{\bm{w}}^2}
 {\bm{w}} \cdot {\bm{x}}+ \frac Yw \right) {\bm{w}}
 \right], \\
 \label{eq:am6820180509}
 \Delta w_0 &=& -\eta e \left(\bar{A}_{00} + 
  \frac{{\bm{w}} \cdot {\bm{x}}}w Y + Z \right) w_0.
\end{eqnarray}
The back-propergated error $e$ is calculated as follows. 
Let $e_{i_m}$ be the back-propagated error of neuron $i_m$ in layer $m$.
It is given by the well-known error backpropagation as
\begin{eqnarray}
 \label{eq:am7620180619}
  e_{i_m} &=&  
  \sum_{i_L, i_{m+1}} \stackrel{L}{e_{i_L}}
  B^{i_L}_{i_L, i_{m+1}}
 \varphi' \left( u_{i_{m+1}} \right)  
  w^{i_{m+1}}_{i_m},
  \\
 \stackrel{L}{\bm{e}} &=& {\bm{y}} - \stackrel{L}{\bm{x}}.
\end{eqnarray}

We can implement the unit-wise natural gradient method
using equations (\ref{eq:am6720180509}) and (\ref{eq:am6820180509}) without calculating ${\bm{G}}^{\ast-1}$.  However, the unit-wise ${\bm{G}}$ is derived under the condition that the input ${\bm{x}}$ to each neuron
is subject to a 0-mean Gaussian distribution.  This does not hold in general, so we need to adjust ${\bm{x}}$ by a linear transformation.  We will see that a residual network automatically makes the input $\bm{x}$ to each layer be subject to a 0-mean Gaussian distribution.

Obviously, ${\bm{W}}$ is no more random Gaussian with mean 0 after
learning.  However, since the unit-wise natural gradient proposed
here is computationally so easy, it is worth trying for practical
applications even after learning.

Except for a rank 1 term $\tilde{\bm{w}} \tilde{\bm{w}}= \left(w_i
w_j\right)$ and bias terms ${\bm{e}}_0 \tilde{\bm{w}}+ \tilde{\bm{w}}{\bm{e}}_0$, ${\bm{G}}^{-1}$ is a diagonal matrix.  In other words, as
is seen in equation (\ref{eq:am5920180511}), it is diagonal except for a raw and column corresponding to the bias terms and the rank 1
term $\tilde{\bm{w}} \tilde{\bm{w}}$.  Except for the rank 1 term $\tilde{\bm{w}} \tilde{\bm{w}}$, it has the same structure as that of the quasi-diagonal matrix of Ollivier (2015), justifying the
quasi-diagonal method.

\section{Fisher information of residual network}

The residual network has direct paths from its input to output in each
layer.  We treat the following block of layer $l$: The layer $l$
transforms input $\stackrel{l-1}{\bm{x}}$ to output
$\stackrel{l}{\bm{x}}$ by
\begin{eqnarray}
 \stackrel{l}{x_i} &=& \sum_j \stackrel{l}{v_{ij}}\varphi
  \left(\stackrel{l}{u_j}\right) + \alpha \stackrel{l-1}{x_i}, \\
 \stackrel{l}{u_j} &=& \sum_k \stackrel{l}{w_{jk}} \stackrel{l-1}{x_k}
  + \stackrel{l}{b_j}
\end{eqnarray}
(see Figure 2).  
\begin{figure}
 \centering
 \includegraphics[width=7cm]{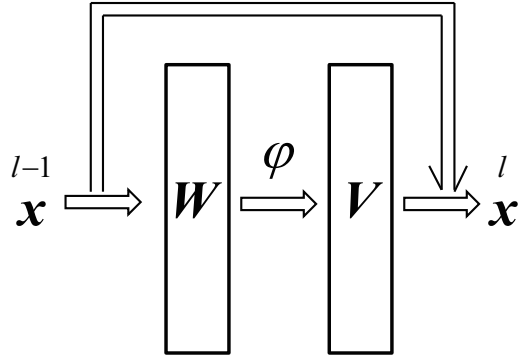}
 \caption{Residual network}
\end{figure} 
Here $\alpha \le 1$ is a decay factor, ($\alpha=1$
is conventionally used), and $\stackrel{l}{v_{ij}}$ are randomly generated iid Gaussian variables subject to $N(0, \sigma^2_v/n)$.

We show how the activity develops in a residual network (Yang and Schoenholtz, 2017).  We easily have the recursive relation,
\begin{eqnarray}
 \stackrel{l}{A} &=& \frac 1n \sum \left(v_{ij} \varphi
 \left( \stackrel{l}{u_j}\right)+ \alpha \stackrel{l-1}{x_i}\right)
 \left( v_{ik} \varphi \left( \stackrel{l}{u_k}\right)+
  \alpha \stackrel{l-1}{x_i}\right) \\
   \label{eq:8820170801}
 &=& \sigma^2_v \bar{A}^l + \alpha^2 \stackrel{l-1}{A},
\end{eqnarray}
where
\begin{equation}
 \bar{A}^l = \int \left\{ \varphi \left(\tau_l v \right)\right\}^2 Dv
\end{equation}
Eq (\ref{eq:8820170801}) shows that $\stackrel{l}{A}$ diverges to infinity as $l$ increase when $\alpha \ge 1$. Therefore, we recommend to use $\alpha < 1$.

The layer $l$ consists of two sublayers.  One is the ordinary neural
network with weights $\stackrel{l}{w_{jk}}$, bias $\stackrel{l}{b_j}$
and activation function $\varphi$.  The other is a linear network that
randomizes the outputs $\varphi \left(u_j \right)$ of the first layer,
transforming them to asymptotically independent 0-mean Gaussian random
variables.  Therefore, mean 0 quasi independent Gaussianity is guaranteed for
$\stackrel{l}{\bm{x}}$.  Since the second linear network is used for the
purpose making output $\stackrel{l}{\bm{x}}$ subject to 0-mean 
independent Gaussian distributions, we fix them throughout the learning
process for simplicity.  That is, $\left\{ \stackrel{l}{w_{ij}},
\stackrel{l}{b_i}\right\}$ are subject only to stochastic gradient
learning.  Therefore, we study the Fisher information with respect to
$\left\{\stackrel{l}{w_{ij}}, \stackrel{l}{b_i}\right\}$ only.  It is redundant to train both $v_{ij}$ and $w_{ij}$.  The role of $v_{ij}$ is to Gaussianize the outputs of layers.  We recommend to fix $\stackrel{l}{v_{ij}}$ throughout learning processes once they are randomly assigned in the initial stage.  

We calculate the following recursive formula,
\begin{eqnarray}
 \frac{\partial \stackrel{l}{x_i}}{\partial \stackrel{m}{w_{st}}} &=&
  \sum_{j, k} \stackrel{l}{v_{ij}} \varphi'
  \left(\stackrel{l}{u_j}\right)
  w^l_{jk} \frac{\partial \stackrel{l-1}{x_k}}{\partial
  \stackrel{m}{w_{st}}} + \alpha
 \frac{\partial \stackrel{l-1}{x_i}}{\partial \stackrel{m}{w_{st}}} \\
 &=& \sum_k \stackrel{l}{B_{ik}}
  \frac{\partial \stackrel{l-1}{x_k}}{\partial \stackrel{m}{w_{st}}}, 
\end{eqnarray}
where 
\begin{equation}
 \stackrel{l}{B_{ik}} = \sum_j \stackrel{l}{v_{ij}} \varphi'
 \left(\stackrel{l}{u_j}\right) \stackrel{l}{w_{jk}}
  + \alpha \delta_{ik}
\end{equation}
in the case of a residual net.  Note that $\stackrel{l}{B_{ik}}$ is of
order $1/\sqrt{n}$ when $i \ne k$, and
\begin{equation}
 \stackrel{l}{B_{ii}} = \alpha + O_p (1/\sqrt{n}).
\end{equation}
From this we have
\begin{equation}
 \frac{\partial \stackrel{L}{x_i}}{\partial \stackrel{m}{w_{st}}} =
  \sum_k B^i_k \stackrel{m}{v_{ks}} \varphi'
 \left(\stackrel{m}{u_s}\right) \stackrel{m-1}{x_t}
\end{equation}
and
\begin{equation}
 G \left( \stackrel{m}{w_{st}}, \stackrel{l}{w_{s't'}}\right) =
 E_{\bm{x}} \left[ \sum_i  B^i_{k_m}
  B^i_{k_l} \stackrel{m}{v_{k_m s}} \stackrel{l}{v_{k_l s'}}
  \varphi' \left(\stackrel{m}{u_s}\right)
  \varphi' \left(\stackrel{l}{u_{s'}}\right) \stackrel{m-1}{x_t}
  \stackrel{l-1}{x_{t'}}
\right].
\end{equation}
Here we again use the domino theorem, where previous $\chi$ is replaced
by
\begin{equation}
 \bar{\chi} = \sigma^2_v \chi + \alpha.
\end{equation}
Since $\stackrel{m}{v_{ks}}$ and $\stackrel{l}{v_{k's'}}$ are
independent when $m \ne l$, $G \left(\stackrel{m}{w_{st}},
\stackrel{l}{w_{s' t'}}\right)$ is of order $1/\sqrt{n}$.  This is true
when $m=l$, $s \ne s'$.  So we have the following theorem.

\begin{theorem}\upshape
The Fisher information matrix ${\bm{G}}$ of a residual net is unit-wise diagonal to within terms of order $1/\sqrt{n}$.

Since $\stackrel{l}{\bm{x}}$ are asymptotically subject to $N(0,
 \bar{\sigma}^2)$, where $\bar{\sigma}^2$ is determined from $\chi$, we
 can apply our procedure of unit-wise natural gradient leaning described
 in the previous section without any modification.
\end{theorem}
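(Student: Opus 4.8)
The plan is to follow the template of the proof of Theorem~1, replacing the feedforward backpropagation matrix by its residual-net counterpart $\stackrel{l}{B_{ik}} = \sum_j \stackrel{l}{v_{ij}}\varphi'(\stackrel{l}{u_j})\stackrel{l}{w_{jk}} + \alpha\delta_{ik}$ and tracking how the skip connection modifies the domino constant. First I would establish the residual-net version of the Domino Lemma. Conditionally on the input, $\stackrel{l}{B_{ik}}$ has diagonal part $\alpha + O_p(1/\sqrt n)$ and off-diagonal part $O_p(1/\sqrt n)$. Computing $\sum_i \stackrel{l}{B_{ik}}\,\stackrel{l}{B_{ik'}}$ and taking expectation over the weights, the independence and vanishing mean of the $\stackrel{l}{v_{ij}}$ collapse the cross product of the two sums onto a single diagonal term $\sigma_v^2\chi\,\delta_{kk'}$, while the skip term $\alpha\delta_{ik}$ contributes an additive constant; the self-averaging of $\varphi'(\stackrel{l}{u_j})$ together with the law of large numbers then concentrates the sum on $\bar\chi\,\delta_{kk'}$, the feedforward constant $\chi$ being replaced by the modified constant $\bar\chi$ in which the nonlinear sublayer contributes $\sigma_v^2\chi$ and the skip connection an additive constant. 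Iterating from layer $L$ down to $m+1$ propagates the Kronecker delta exactly as in the feedforward domino argument, leaving the product $\bar\chi^{L}_{m+1}$.

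Next I would substitute this into the component expression for $G(\stackrel{m}{w_{st}},\stackrel{l}{w_{s't'}})$ and distinguish two cases, mirroring Theorem~1. For different layers $m\ne l$, the isolated randomizing weights $\stackrel{m}{v_{k_m s}}$ and $\stackrel{l}{v_{k_l s'}}$ that appear as explicit factors (coming from $\partial\stackrel{L}{x_i}/\partial\stackrel{m}{w_{st}}$) belong to different layers, and hence are independent, zero-mean, and independent of the $B$-products surviving the domino reduction; the expectation therefore factorizes with a vanishing first moment, forcing the whole block to be $O_p(1/\sqrt n)$. For the same layer $m=l$ but distinct units $s\ne s'$, the two isolated factors carry distinct second indices $s\ne s'$ and are again independent with mean zero, so the identical factorization argument kills the block to order $1/\sqrt n$. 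The only surviving $O(1)$ contributions are the genuine diagonal blocks $m=l$, $s=s'$, which establishes that $\bm{G}$ is unit-wise block diagonal up to $O_p(1/\sqrt n)$.

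For the second assertion of the theorem I would verify the asymptotic Gaussianity that allows the Section~5 formulas to be reused verbatim. Conditionally on the pre-activations $\stackrel{l}{u_j}$, the output $\stackrel{l}{x_i} = \sum_j \stackrel{l}{v_{ij}}\varphi(\stackrel{l}{u_j}) + \alpha\stackrel{l-1}{x_i}$ is an exact linear combination of the iid Gaussian weights $\stackrel{l}{v_{ij}}$ of variance $\sigma_v^2/n$, hence conditionally Gaussian with mean zero, and by the law of large numbers its variance concentrates on $\sigma_v^2\bar A^l$; combined with the inductive Gaussianity of the carried-forward term $\alpha\stackrel{l-1}{x_i}$, this gives $\stackrel{l}{x_i}\sim N(0,\bar\sigma^2)$ asymptotically, while the cross-covariance of distinct components is $O(1/\sqrt n)$, so they are asymptotically independent. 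Because $N(0,\bar\sigma^2\bm{I})$ is isotropic, the orthonormal-basis construction of Section~5 applies after the trivial rescaling by $\bar\sigma$, and the explicit single-unit $\bm{G}$ and $\bm{G}^{-1}$ carry over unchanged.

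The main obstacle I anticipate is not the domino bookkeeping, which is a direct adaptation, but rigorously justifying the two factorizations in the presence of shared randomness: the isolated weight $\stackrel{m}{v_{k_m s}}$ that pins the unit index also appears inside the backpropagation matrices $\stackrel{m}{B_{ik}}$, so one must check that the $O(1)$ part of the domino-reduced $B$-product does not resonate with that explicit factor and reintroduce a nonvanishing mean. Controlling this requires the mean-field self-averaging splitting of Appendix~I applied layer by layer, together with a variance estimate confirming that the residual correlation stays of order $1/n$.
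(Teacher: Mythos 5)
Your proposal is correct and takes essentially the same approach as the paper: you derive the residual backpropagation matrix with its $\alpha\delta_{ik}$ skip term, run the domino argument with $\chi$ replaced by the modified constant $\bar{\chi}$, use the zero mean and mutual independence of the isolated factors $\stackrel{m}{v_{k_m s}}$ and $\stackrel{l}{v_{k_l s'}}$ to reduce the different-layer ($m \ne l$) and different-unit ($s \ne s'$) blocks to $O_p(1/\sqrt{n})$, and invoke the random linear sublayer to Gaussianize $\stackrel{l}{\bm{x}}$ so that the single-unit formulas of the previous section apply. The paper's own proof is exactly this chain of steps, only stated more tersely (it does not address the shared-randomness subtlety you flag in your final paragraph).
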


We suggest the following approximate learning algorithm for a residual
network with the ReLU activation function:

\begin{description}
 \item[1.] Fix $\sigma^2_v$ and $\sigma^2_w$ and $\alpha < 1$.
 \item[2.] Given a training example $\left({\bm{y}}_t, {\bm{x}}_t
 \right)$, calculate the back-propageted error $e_{i_m}$ based on equation like (\ref{eq:am7620180619}) for each unit $i_m$ of the $m$-th layer. 
 \item[3.] Using the current $\bm{w}^{\ast}_{i_m}$,
	    calculate $\bar{A}_{\infty}, X, Y$ and $Z$ from
	    equations (\ref{eq:am7020180619})--(\ref{eq:am7220180619}) and
	    Appendix II.
 \item[4.] Update the current ${\bm{w}}^{\ast}_{i_m}$ by
	    using equations (\ref{eq:am6720180509})--(\ref{eq:am6820180509}).
 \item[5.] We may use the Polyak averaging (Polyak \& Juditsky, 1992) after
learning.  
\end{description}

It is interest to compare the result of the present algorithm with that of the quasi-diagonal method (Marceau-Caron \& Ollivier, 2016).  Ollivier (2018) proposed an adaptive method of obtaining ${\bm{G}}^{-1}
\nabla_W l$ recursively from data.  The method, called TANGO, looks promising
for implementing natural gradient learning, and our method can be used to
obtain the initial value of the velocity vector in TANGO.

\section{Conclusions}

The paper applies the statistical neurodynamical method for studying the
Fisher information matrix of deep random networks. In continuation to
the accompanying paper (Amari, Karakida \& Oizumi; 2018) on the
feedforward paths where signal are transformed through random networks
(see Poole et al, 2016), it addresses the feedback paths in which error
signals are back-propagated (Schoenholz et al., 2017). The Fisher
information is calculated from back-propagated errors.  The main
result of the paper is to prove that the Fisher information matrix in a large random network is unit-wise block diagonalized approximately.
This justifies the unit-wise natural gradient method (Ollivier, 2015).  The unit-wise Fisher information is a tensor product of the Fisher information matrices of single neurons.  We calculated the Fisher information and its inverse explicitly, showing its peculiar structure.  It justifies the quasi-diagonal natural gradient method (Ollivier, 2015).  We finally proposed to apply the present results to a resnet, where $\alpha<1$ and $v_{ij}$ are fixed.

\section*{Appendix I: \; Self-averaging property}

Let us treat a simple case
\begin{equation}
 u = \sum w_i x_i,
\end{equation}
where $x_i$ are fixed and $w_i \sim N \left(0, \sigma^2/n\right)$.  We
consider 
\begin{equation}
 E \left[ f(u) w_i w_j \right],
\end{equation}
where we put $f(u)= \left\{ \varphi'(u)\right\}^2$ and $i_{l-1}= i,
i'_{l-1}=j$.  Put
\begin{equation}
 \tilde{u} = \sum_{i \ne 1, 2} w_i x_i.
\end{equation}
Then
\begin{eqnarray}
 u &=& \tilde{u} + w_1 x_1 + w_2 x_2, \\
 f(u) &=& f \left(\tilde{u}\right) + f' \left(\tilde{u}\right)
 \left(w_i x_i + w_j x_j \right).
\end{eqnarray}
We have, neglecting higher-order terms,
\begin{equation}
 E \left[ f(u) w_i w_j \right] = E 
 \left[ f \left(\tilde{u}\right) w_i w_j \right] 
 + E \left[ f' \left(\tilde{u}\right) w_i w_j
 \left(w_i x_i + w_j x_j \right)\right].
\end{equation}
Since $\tilde{u}$ and $w_1 w_2$ are independent,
\begin{eqnarray}
 E \left[f(u)w_i w_j \right] &=& E \left[f \left(\tilde{u}\right)\right]
 E \left[w_i w_j \right] \\
 &=& E \left[f(u)\right]E \left[w_i w_j \right]
\end{eqnarray}
except for higher-order terms.

\section*{Appendix II: \; Fisher information for ReLU}

The ReLU activation function is given by
\begin{equation}
 \varphi(u) = \left\{
  \begin{array}{ll}
   u, & u>0, \\
   0, & u \le 0.
  \end{array}
 \right.
\end{equation}
We calculate $A_{ij} \left({\bm{w}}, w_0 \right)$ given by
equations (\ref{eq:am5020180511})--(\ref{eq:am5220180511}).  Since
\begin{equation}
 \varphi' \left(w x^{\ast}_1 + w_0 \right) = 
 \left\{
  \begin{array}{ll}
   1, & w x^{\ast}_1 + w_0 >0, \\
   0, & \mbox{otherwise},
  \end{array}
 \right.
\end{equation}
we have
\begin{eqnarray}
 A_{00} &=& \frac 1{\sqrt{2 \pi}} \int^{\infty}_{-\frac{w_0}w}
 \exp \left\{-\frac{u^2}2 \right\} du = \mbox{erf}
 \left( \frac{w_0}w\right), \\
 \mbox{erf}(u) &=& \frac 1{\sqrt{2 \pi}} \int^u_{-\infty}
  \exp \left\{-\frac{u^2}2\right\} dv.
\end{eqnarray}
Similarly,
\begin{eqnarray}
 A_{0n} &=& \frac 1{\sqrt{2 \pi}} \int^{\infty}_{-\frac{w_0}w}
 u \exp \left\{-\frac{u^2}2 \right\} du = 
 \frac 1{\sqrt{2 \pi}} \exp \left\{
 -\frac 12 \left( \frac{w_0}w \right)^2 \right\} \\
 A_{nn} &=& \frac 1{\sqrt{2 \pi}} \int^{\infty}_{-\frac{w_0}w} u^2 \exp
 \left\{ -\frac{u^2}2 \right\} du \\
 &=& \mbox{erf} \left( \frac{w_0}w \right) - \frac 1{\sqrt{2 \pi}}
 \frac{w_0}w \exp \left\{ -\frac 12 \left(\frac{w_0}w \right)^2 \right\}.
\end{eqnarray}

\end{document}